\Crefname{section}{sec.}{secs.}
\Crefname{section}{Sec.}{Secs.}
\Crefname{theorem}{thm.}{thms.}
\Crefname{theorem}{Thm.}{Thms.}
\Crefname{figure}{fig.}{figs.}
\Crefname{figure}{Fig.}{Figs.}
\crefname{equation}{eq.}{eqs.}
\Crefname{equation}{Eq.}{Eqs.}
\begin{document}
\title{Tied Prototype Model for \\Few-Shot Medical Image Segmentation}

\author{Hyeongji Kim\inst{1}\textsuperscript{(\Letter)}\orcidlink{0000-0003-3935-6656} 
\and
Stine Hansen\inst{2}\orcidlink{0000-0002-0962-6160} \and
Michael Kampffmeyer\inst{1,3}\orcidlink{0000-0002-7699-0405}}
\authorrunning{H. Kim et al.}
\institute{Department of Physics and Technology, UiT - The Arctic University of Norway, Tromsø, Norway \\
\href{mailto:hyeongji.kim@uit.no}{hyeongji.kim@uit.no}\and 
Norwegian Centre for Clinical Artificial Intelligence (SPKI), University Hospital of North-Norway, Tromsø, Norway \and Norwegian Computing Center, Oslo, Norway }
    
\maketitle              %
\begin{abstract}
Common prototype-based medical image few-shot segmentation (FSS) methods %
model foreground and background classes using class-specific prototypes. However, given the high variability of the background, a more promising direction is to focus solely on foreground modeling, treating the background as an anomaly---an approach introduced by ADNet. %
Yet, ADNet %
faces three key limitations: %
dependence on a single prototype per class, a focus on binary classification, and fixed thresholds that fail to adapt to patient and organ variability. To address these shortcomings, 
we propose the Tied Prototype Model (TPM), a principled reformulation of ADNet with tied prototype locations for foreground and background distributions. Building on %
its probabilistic foundation, TPM naturally extends to multiple prototypes and multi-class segmentation while effectively separating %
non-typical background features. %
Notably, both extensions lead to improved segmentation accuracy. %
Finally, we leverage naturally occurring class priors to define an ideal target for adaptive thresholds, boosting %
segmentation performance. %
Taken together, TPM %
provides a fresh perspective on prototype-based FSS for medical image segmentation. The code can be found at %
\url{https://github.com/hjk92g/TPM-FSS}. %

\keywords{Few-Shot Segmentation \and Medical Image Segmentation \and Multi-Class Segmentation \and Prototype Model }

\end{abstract}

\section{Introduction}

Medical image segmentation is a critical component of numerous clinical applications such as diagnostics \cite{tsochatzidis2021integrating} and treatment planning \cite{chen2021deep}. While supervised deep learning approaches can achieve good performance, %
their applications are constrained by the limited availability of annotated medical images. To address this, few-shot segmentation (FSS) approaches have been proposed %
to effectively adapt models trained with labeled datasets to new, previously unseen, classes.

The dominant FSS approach employs prototype networks \cite{snell2017prototypical}, first introduced by PANet \cite{wang2019panet}, 
where each class is represented by a single prototype obtained through mean average pooling (MAP). 
Recognizing %
the limitation of a single prototype in modeling rich feature variants, PPNet \cite{liu2020part} proposed using multiple prototypes for each class to improve expressiveness. %
More recently, there has been growing interest in leveraging self-supervised learning (SSL) to circumvent the need for labeled medical data altogether. %
ALPNet \cite{ouyang2020self} 
 pioneered the use of SSL %
by using pseudo-labels generated from superpixels, with additional local prototypes aiding intra-class local information. %
However, the background class typically exhibits significant %
heterogeneity, %
making it difficult to model using a fixed number of prototypes. %
Recognizing this, ADNet \cite{hansen2022anomaly} employs %
an anomaly-detection-inspired approach, %
focusing on modeling the foreground prototype %
while treating the background as anomalous. %
It assigns anomaly scores %
to each spatial location, using a fixed threshold at inference for segmentation. %
This simple method has demonstrated greater robustness to background heterogeneity relative to ALPNet and has been successfully used for %
the detection of brain tumors \cite{balasundaram2023foreground}, ischemic stroke lesions \cite{tomasetti2023self}, and lung lesions \cite{tian2025multilevel}. %

However, ADNet model \cite{hansen2022anomaly} has three main drawbacks. 
First, its reliance on %
a single foreground prototype limits the expressiveness of the foreground class in medical images with significant intra-class variation. %
While non-anomaly-detection multi-prototype %
methods exist~\cite{ouyang2020self,cheng2024few}, they struggle to capture the full diversity of the background class. %
Meanwhile, %
existing anomaly-detection-based multi-prototype variants %
\cite{salahuddin2022self,zhu2023few,zhao2024cpnet,zhu2024partition} of ADNet involve 
cumbersome modifications such as the introduction of %
new layers or hyperparameters. Hence, it is desirable for a more grounded yet simple approach that efficiently leverages multiple foreground prototypes at the feature level. 
Second, ADNet is limited to %
binary classification settings. %
While ADNet++ \cite{hansen2023adnet++} extends it to multi-class classification, it only adapts the inference phase and thus does not effectively model the class relationships in training. %
Finally, ADNet's fixed threshold is insufficient to accommodate %
the inherent variability across patients and organs %
in diverse applications. While %
adaptive thresholds have been proposed to address this by utilizing support features \cite{cheng2024frequency}, query information \cite{shen2023q}, or both \cite{zhu2024learning}, their %
threshold values %
are %
learned through cross-entropy-based losses, which %
may not necessarily optimize segmentation accuracy as further discussed in \Cref{sec:ICP}. %

\begin{figure}[tp]
     \centering
     \begin{subfigure}[b]{0.1925\textwidth}%
         \centering
         \includegraphics[width=\textwidth]{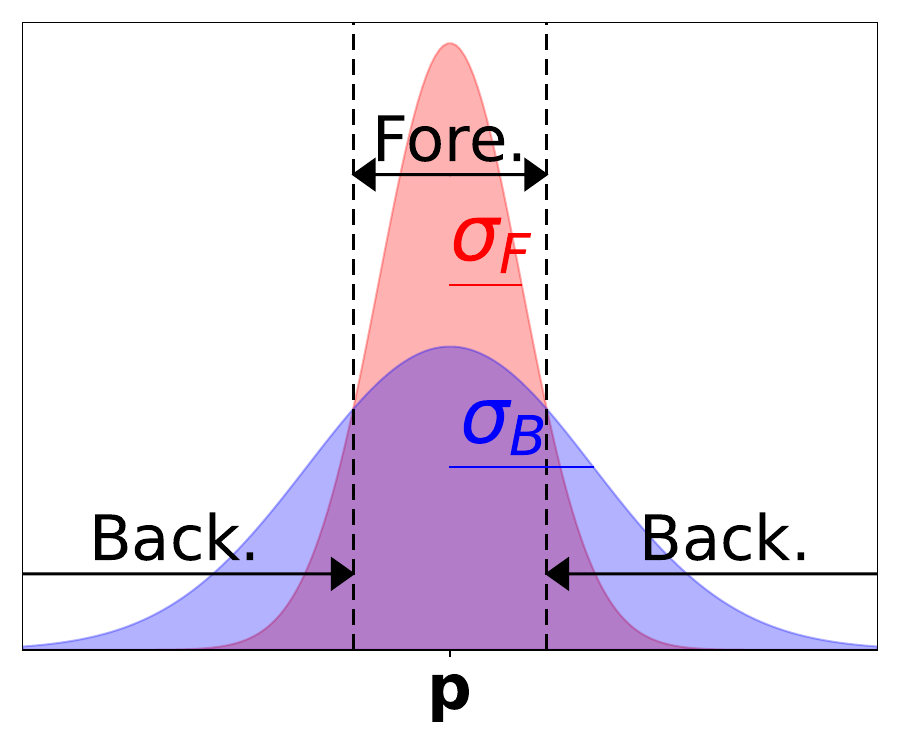}
         \caption{1D-distrib.}
         \label{fig:tied_1d}
     \end{subfigure}
     \begin{subfigure}[b]{0.1925\textwidth}%
         \centering
         \includegraphics[width=\textwidth]{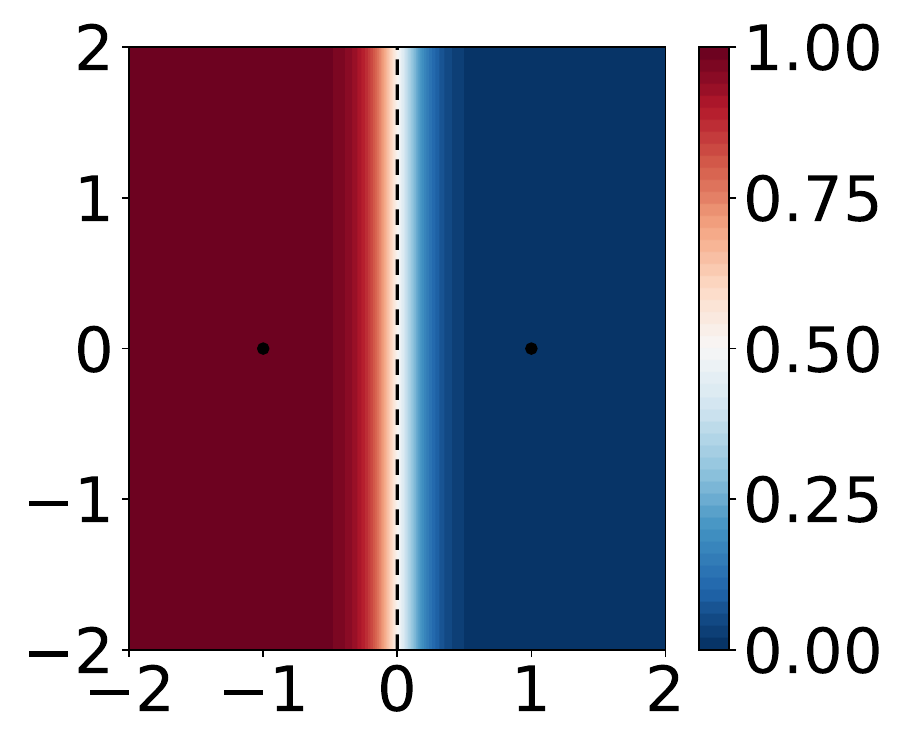}
         \caption{Std. model}
         \label{fig:std_model}
     \end{subfigure}
     \begin{subfigure}[b]{0.1925\textwidth}%
         \centering
         \includegraphics[width=\textwidth]{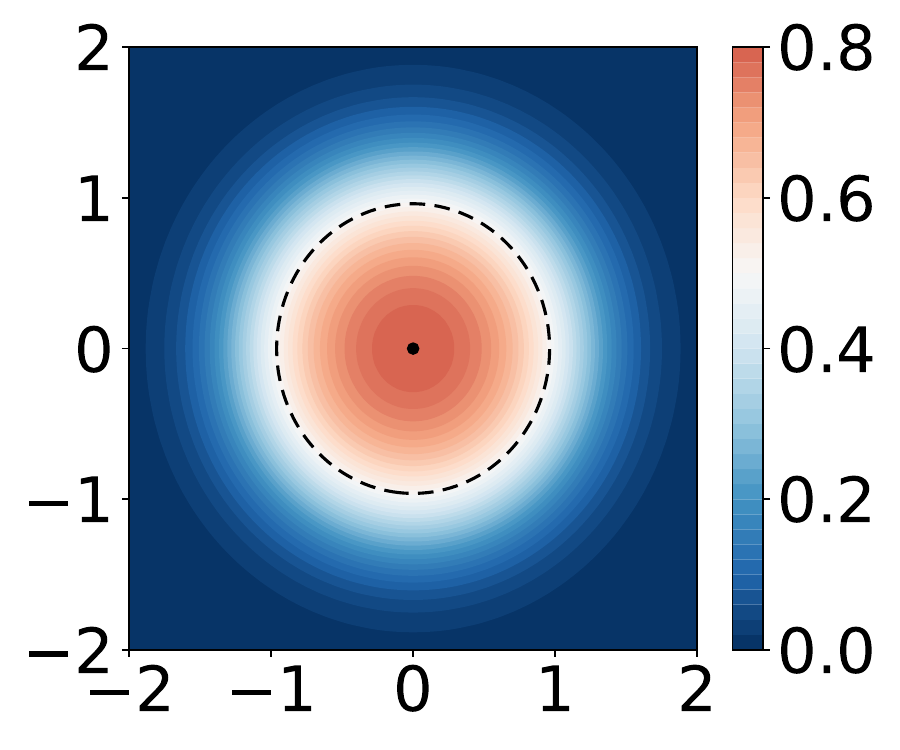}
         \caption{TPM-SP}
         \label{fig:tied_model}
     \end{subfigure}
     \begin{subfigure}[b]{0.1925\textwidth}%
         \centering
         \includegraphics[width=\textwidth]{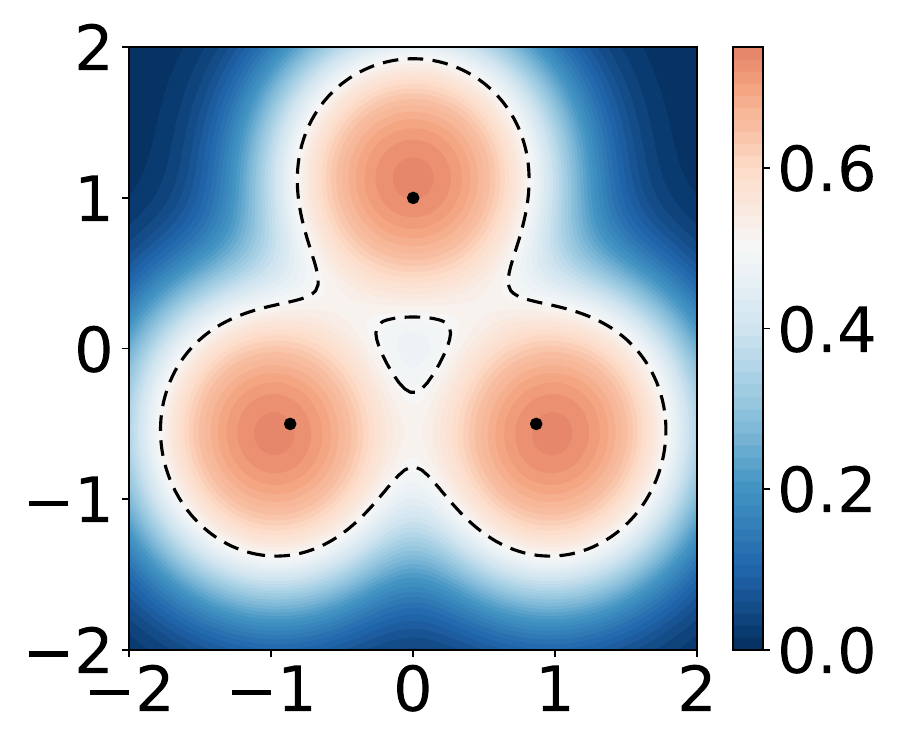}
         \caption{TPM-MP}
         \label{fig:tied_model_MP}
     \end{subfigure}
     \begin{subfigure}[b]{0.1925\textwidth}%
         \centering
         \includegraphics[width=\textwidth]{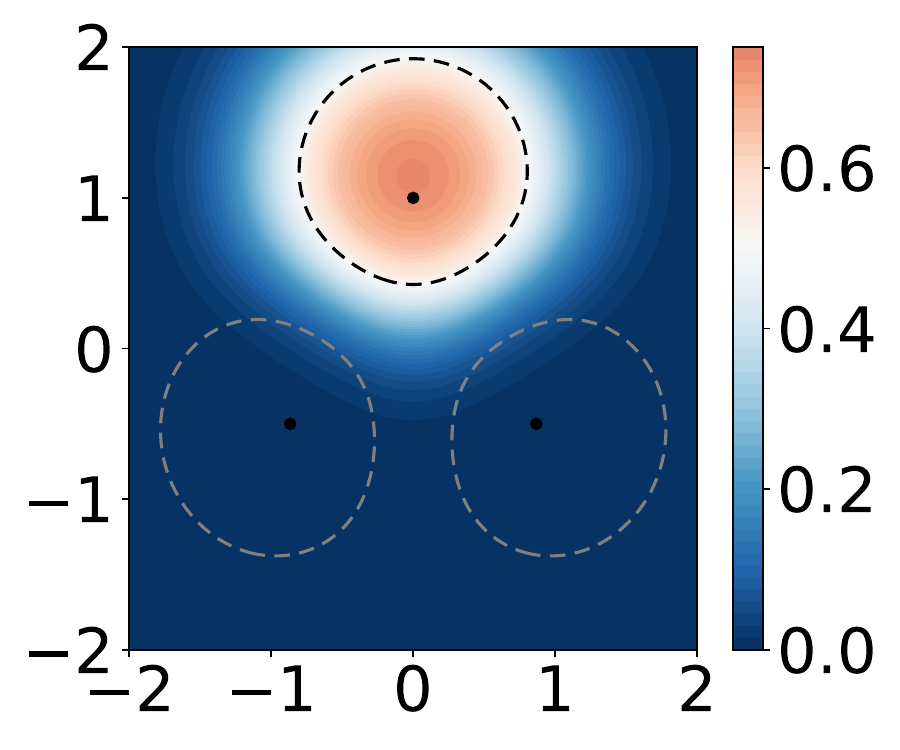}
         \caption{TPM-MC}
         \label{fig:tied_model_MC}
     \end{subfigure}
        \caption{%
        (a) TPM class distributions. The (first) foreground class probability \(p(F|x)\), 
        based on (b) the standard %
        prototype model, (c) %
        TPM %
        with a single prototype (SP), similar to the ADNet~\cite{hansen2022anomaly}---which is limited to spherical space, %
        (d) TPM with multiple prototypes (MP), and (e) TPM for multi-class (MC) classification. %
        Dashed lines indicate decision boundaries for the shown (black) and the other (\textcolor{gray}{gray}) foreground classes.}
        \label{fig:TP_model}
\end{figure}

To address these issues, we take a step back and propose the Tied Prototype Model (TPM), a %
principled reformulation of ADNet \cite{hansen2022anomaly} that leverages tied (shared) prototype locations for foreground and background distributions. %
As visualized in \Cref{fig:TP_model}, the distributional foundation of TPM allows for its natural extension %
to both multiple prototypes and multi-class classification settings while retaining its simplicity and ability to separate non-typical background features. %
In particular, we propose a %
Gaussian mixture model (GMM)-based approach for multi-prototype %
segmentation, %
demonstrating its effectiveness in improving segmentation accuracy. 
Moreover, we introduce multi-foreground training for multi-class segmentation and show that it consistently offers superior representation learning capability. %
Lastly, by employing naturally occurring class priors %
and feature distance distributions, we %
provide %
an ideal %
threshold estimation approach, %
aiming to %
enhance segmentation accuracy.

\section{Preliminaries}

\subsection{Self-Supervised Few-Shot Learning Problem Setting} %

We adopt the %
self-supervised few-shot learning from \cite{hansen2022anomaly}, %
where pseudo-labels for unlabeled volumes are generated by performing supervoxel segmentation and randomly sampling a supervoxel as the foreground. %
Two 2D slices with the supervoxel %
are then selected to serve as support and query images, with one being augmented. %
The support image \(\mathbf{X}^{s}\) and its pseudo-label \(\mathbf{Y}^{s}\), where \(\mathbf{Y}^{s}_c\) denotes the mask of class \(c\), guide the segmentation of %
the query image \(\mathbf{X}^{q}\), all of size (\(H, W\)). %
A feature extractor network \(f_{\theta}\) encodes images into feature maps %
\(\mathbf{F}^s=f_{\theta}(\mathbf{X}^s)\) and \(\mathbf{F}^q=f_{\theta}(\mathbf{X}^q)\), with spatial dimensions (\(H', W'\)) %
and feature dimension \(d\). %
The upsampled predicted query label \(\hat{\mathbf{Y}}^{q}\) is then compared to 
label \(\mathbf{Y}^{q}\), training the network \(f_{\theta}\) with a cross-entropy-based segmentation loss. %

\subsection{Anomaly Detection-Inspired Few-Shot Segmentation} %

In ADNet \cite{hansen2022anomaly}, the foreground prototype \(\mathbf{p}\) %
is computed by upsampling the feature maps to the original size %
and applying %
masked average pooling (MAP) as: %
\begin{align}\label{eq:proto}
    \mathbf{p} = \frac{\sum_{\mathbf{r}} \mathbf{F}^s (\mathbf{r}) \odot \mathbf{Y}^s_F (\mathbf{r})}{\sum_{\mathbf{r}} \mathbf{Y}^s_F (\mathbf{r})},
\end{align}
with \(\odot\) as the Hadamard product, \(F\) the foreground class, 
and \(\mathbf{r}\) the pixel location. %
It defines the anomaly score %
\(S(\mathbf{r})=-\alpha \cos(\mathbf{F}^q (\mathbf{r}), \mathbf{p})\) for each query feature \(\mathbf{F}^q (\mathbf{r})\), %
where \(\alpha=20\) is a scaling factor following \cite{oreshkin2018tadam}, and \(\cos(\cdot,\cdot)\) is the cosine similarity. %
Then, %
ADNet estimates foreground class probability as:
\begin{align}\label{eq:ADNet}
    p(F|\mathbf{F}^q (\mathbf{r}))=1-\text{sig}(S(\mathbf{r})-T_S),
\end{align}
where \(\text{sig}(\cdot)\) is the sigmoid function with a steepness parameter \(\kappa=0.5\), and $T_S$ is a learnable anomaly score threshold. %
This estimate is then upsampled and, given the binary task focus, is combined with its complement to generate the predicted label \(\hat{\mathbf{Y}}^{q}\).

\section{Method}

\subsection{Tied Prototype Model}\label{sec:tied_proto_SP} %

The tied prototype model (TPM), with our established equivalence to ADNet \cite{hansen2022anomaly} under a certain condition, adopts a probabilistic view of discriminative classification. This provides valuable insights for advancing few-shot medical image segmentation while addressing ADNet's limitations without modifying its architecture. %

As illustrated in \Cref{fig:tied_1d}, the key idea of our model is to use class distributions with a tied (shared) center position \(\mathbf{p}\) for %
foreground and background classes while differing in their dispersion parameters. 
This contrasts with standard classification, %
which uses distinct centers for different classes. 
Though %
counterintuitive, enforcing the same center for both foreground and background classes %
is essential in separating the background class, %
as shown in \Cref{fig:tied_model} and \Cref{thm:eqv}. %

TPM assumes that the foreground and background class distributions %
follow multivariate normal distributions, \(\mathcal{N}(\mathbf{p},\sigma_F^2I)\) and \(\mathcal{N}(\mathbf{p},\sigma_B^2I)\), respectively. Here, the tied prototype \(\mathbf{p}\) serves as the location parameter, while \(\sigma_F\) and \(\sigma_B\) are class standard deviations, with \(\sigma_F<\sigma_B\). %
The symbol \(I\) denotes %
the identity matrix. %
By applying 
Bayes' theorem, %
the foreground class probability is given by: 
\begin{align}\label{eq:tied_proto}%
    p(F|\mathbf{F}^q (\mathbf{r}))=\frac{p_F \phi(\mathbf{p},\sigma_F;\mathbf{r})}{p_F \phi(\mathbf{p},\sigma_F;\mathbf{r})+p_B \phi(\mathbf{p},\sigma_B;\mathbf{r})}%
        =\frac{1}{1+\frac{p_B}{p_F}\frac{ \phi(\mathbf{p},\sigma_B;\mathbf{r})}{\phi(\mathbf{p},\sigma_F;\mathbf{r})}}
\end{align}
where \(p_F\) and \(p_B\) denote %
class priors, %
and \(\phi(\mathbf{p},\sigma;\mathbf{r})\) represents %
the normal distribution density function %
with a mean \(\mathbf{p}\) and covariance matrix \(\sigma^2I\). %

\Cref{fig:tied_model} visualizes the estimated class probability of TPM %
with a single prototype, %
assigning %
higher probabilities to features %
closer to the prototype \(\mathbf{p}\) while giving lower values to those further from the prototype. %
This behavior highlights the model's effectiveness in distinguishing typical features from diverse non-typical features, which is particularly useful for foreground and background classification.
As illustrated in \Cref{fig:std_model}, this beneficial property is absent in the standard prototype model, which relies on distinct (non-tied) prototype positions and assigns class probabilities along the direction of the prototypes' difference.

\begin{theorem}\label{thm:eqv}
In one foreground classification %
with a single prototype with %
a unit spherical embedding \(\mathbb{S}^{d-1}\), %
the tied prototype model is equivalent to %
ADNet~\cite{hansen2022anomaly}. %
\end{theorem}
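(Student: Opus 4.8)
The plan is to reduce both models' foreground posterior to a common one-parameter logistic form in the cosine similarity $t := \cos\!\big(\mathbf{F}^q(\mathbf{r}),\mathbf{p}\big)$ and then read off the parameter dictionary relating them. The single geometric fact that makes this work is the spherical constraint: since both $\mathbf{F}^q(\mathbf{r})$ and $\mathbf{p}$ lie on $\mathbb{S}^{d-1}$, we have $\|\mathbf{F}^q(\mathbf{r})-\mathbf{p}\|^2 = 2-2t$, so the Gaussian densities in \Cref{eq:tied_proto}---which depend on $\mathbf{F}^q(\mathbf{r})$ only through this squared distance---turn into functions of exactly the quantity $t$ that ADNet's anomaly score is built from.

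Concretely, I would first substitute $\phi(\mathbf{p},\sigma;\mathbf{r}) = (2\pi\sigma^2)^{-d/2}\exp\!\big(-\|\mathbf{F}^q(\mathbf{r})-\mathbf{p}\|^2/(2\sigma^2)\big)$ into the density ratio $\phi(\mathbf{p},\sigma_B;\mathbf{r})/\phi(\mathbf{p},\sigma_F;\mathbf{r})$ appearing in \Cref{eq:tied_proto}. Using $\|\mathbf{F}^q(\mathbf{r})-\mathbf{p}\|^2 = 2-2t$, this ratio collapses to $(\sigma_F/\sigma_B)^d\exp\!\big((1-t)\,(\sigma_F^{-2}-\sigma_B^{-2})\big)$, so \Cref{eq:tied_proto} becomes $p(F|\mathbf{F}^q(\mathbf{r})) = \big(1 + C\,e^{-\beta t}\big)^{-1}$ with $\beta := \sigma_F^{-2}-\sigma_B^{-2}$ (strictly positive since $\sigma_F<\sigma_B$) and $C := (p_B/p_F)\,(\sigma_F/\sigma_B)^d\,e^{\beta}$. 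Next I would do the analogous rewriting of \Cref{eq:ADNet}: with $S(\mathbf{r}) = -\alpha t$ and sigmoid steepness $\kappa$, the identity $1-\text{sig}(z) = (1+e^{\kappa z})^{-1}$ gives $p(F|\mathbf{F}^q(\mathbf{r})) = \big(1 + e^{-\kappa T_S}\exp(-\kappa\alpha t)\big)^{-1}$---exactly the same functional form.

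Matching the two expressions for all $t\in[-1,1]$ forces $\beta = \kappa\alpha$ and $C = e^{-\kappa T_S}$, i.e. the TPM dispersions must satisfy $\sigma_F^{-2}-\sigma_B^{-2} = \kappa\alpha$ and the ADNet threshold must be $T_S = -\kappa^{-1}\log C = -\kappa^{-1}\log\!\big((p_B/p_F)(\sigma_F/\sigma_B)^d e^{\kappa\alpha}\big)$. I would then conclude by spelling out that this is a genuine equivalence of the induced query-labeling rules: any admissible ADNet configuration $(\alpha,\kappa,T_S)$ is reproduced by a (one-parameter) family of TPM parameters $(\sigma_F,\sigma_B,p_F,p_B)$ and conversely, and in the binary single-prototype setting the subsequent upsample-and-complement step is identical in both, so the predicted labels coincide.

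The proof is essentially a computation, so there is no deep obstacle; the points that demand care are bookkeeping ones. The Gaussian normalizers $(2\pi\sigma^2)^{-d/2}$ and the $e^{\beta}$ factor produced by the ``$1-t$'' (rather than ``$-t$'') in the exponent must both be folded into the constant $C$, hence into the effective threshold---dropping either breaks the match. One should also verify that $\beta>0$ is precisely the condition $\sigma_F<\sigma_B$ and is consistent with a positive scaling $\alpha>0$ and steepness $\kappa>0$, and state clearly which parameters are free versus pinned down, so that ``equivalence'' is not over-read as a literal one-to-one identification of parameters.
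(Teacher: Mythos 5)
Your proposal is correct and follows essentially the same route as the paper's proof: substitute the Gaussian densities into the tied-model posterior, use the spherical identity $\|\mathbf{F}^q(\mathbf{r})-\mathbf{p}\|^2 = 2-2\cos(\mathbf{F}^q(\mathbf{r}),\mathbf{p})$, and match the resulting logistic form to ADNet's $1-\mathrm{sig}(S(\mathbf{r})-T_S)$, yielding the same parameter dictionary (your $\beta=\kappa\alpha$ and $C=e^{-\kappa T_S}$ reproduce the paper's choices $\alpha=2(\sigma_F^{-2}-\sigma_B^{-2})$ and $T_S=2\ln(p_F/p_B)-2d\ln(\sigma_F/\sigma_B)-\alpha$ at $\kappa=0.5$). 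The only difference is presentational: you match coefficients of a generic logistic in $t$, while the paper plugs in the specific parameter correspondence directly.
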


\begin{proof}
To establish %
equivalence, we must %
show that \Cref{eq:ADNet,eq:tied_proto} are identical under %
a 
proper parameter correspondence. %
We first define \(\text{TP}(\mathbf{r})\) to simplify \Cref{eq:tied_proto} and replace the multiplication with the exponential function, yielding: %
\begin{align}
    \text{TP}(\mathbf{r})=%
    \frac{1}{1+\frac{p_B}{p_F}\frac{ \phi(\mathbf{p},\sigma_B;\mathbf{r})}{\phi(\mathbf{p},\sigma_F;\mathbf{r})}}=\frac{1}{1+\exp \left (\ln \left ( \frac{ \phi(\mathbf{p},\sigma_B;\mathbf{r})}{\phi(\mathbf{p},\sigma_F;\mathbf{r})} \right )  +\ln \left ( \frac{p_B}{p_F} \right ) \right )}.
\end{align}
By substituting the %
density function \(\phi(\cdot,\cdot)\) with its definition, we obtain: 
\begin{align}
    \text{TP}(\mathbf{r})
    =\frac{1}{1+\exp \left ( \frac{1}{2} \left\|\mathbf{F}^q (\mathbf{r})-\mathbf{p} \right\|^2 \left ( \frac{1}{\sigma_F^2}-\frac{1}{\sigma_B^2} \right )+d\ln \left ( \frac{\sigma_F}{\sigma_B} \right )  +\ln \left ( \frac{p_B}{p_F} \right ) \right )}.
\end{align}
We then set \(\alpha=2 \left ( \frac{1}{\sigma_F^2}-\frac{1}{\sigma_B^2} \right )\) and \(T_S=2\ln{\left (  \frac{p_F}{p_B}\right )}-2d\ln{\left (  \frac{\sigma_F}{\sigma_B}\right )}-\alpha\) to obtain:
\begin{align}
    \text{TP}(\mathbf{r})
    =\frac{1}{1+\exp \left (\frac{1}{2}  \left ( \left ( \frac{1}{2}\left\|\mathbf{F}^q (\mathbf{r})-\mathbf{p} \right\|^2 -1\right )\alpha -T_S \right )\right )}.
\end{align}
Since we use the unit spherical embedding \(\mathbb{S}^{d-1}\), embedding vectors \(\mathbf{F}^q (\mathbf{r})\) and \(\mathbf{p}\) satisfy the equation \(\frac{2-\left\|\mathbf{F}^q (\mathbf{r})-\mathbf{p} \right\|^2}{2} =\cos(\mathbf{F}^q (\mathbf{r}),\mathbf{p})\), leading to: %
\begin{align}\label{eq:TP_ADNet}
    \text{TP}(\mathbf{r})=\frac{1}{1+\exp \left (\frac{1}{2}  \left ( -\alpha \cos(\mathbf{F}^q (\mathbf{r}), \mathbf{p}) -T_S\right )\right )}.
\end{align}
Using the definitions of the anomaly score \(S(\mathbf{r})\) and the sigmoid function \(\text{sig}(\cdot)\), we can observe %
that \Cref{eq:TP_ADNet} is equivalent to \Cref{eq:ADNet}. \hfill \( \square\)%
\end{proof}

\Cref{thm:eqv} reveals %
that our TPM, under specific conditions, %
is a reparameterized %
ADNet \cite{hansen2022anomaly}. Unlike ADNet which inherently %
uses spherical embedding due to its reliance on cosine similarity, our model can utilize %
other geometries. %
Furthermore, it enables extensions beyond a single prototype as discussed below.

\subsection{Binary %
Classification with Multiple Prototypes}\label{sec:tied_proto_MP}

Rooted in its distributional formulation, TPM 
naturally extends to multiple %
prototypes. As illustrated in \Cref{fig:tied_model_MP}, this extension allows it to capture diversity in foreground features while retaining the ability to distinguish non-typical background features. %

From the foreground features, we apply the EM algorithm \cite{yang2020prototype} which alternates between the E-step and M-step, to extract multiple prototypes \(\mathbf{p_{\mathbf{m}}}\) along with their weights \(w_{\mathbf{m}}\), satisfying \(\sum w_{\mathbf{m}}=1\). When multiple prototypes %
are obtained for the foreground class, we can simply use %
the same prototypes %
for the background class as well. %
Specifically, the Gaussian mixture model (GMM) %
\(\sum_{\mathbf{m}} w_{\mathbf{m}} \phi(\mathbf{p_{\mathbf{m}}},\sigma_F;\mathbf{r})\) is used for the foreground class distribution, %
while %
the corresponding %
GMM \(\sum_{\mathbf{m}} w_{\mathbf{m}} \phi(\mathbf{p_m},\sigma_B;\mathbf{r})\) represents %
the background class distribution. %
Using Bayes' theorem, %
the foreground class probability \(p_{\text{MP}}(F|\mathbf{F}^q (\mathbf{r}))\), estimated from multiple prototypes and associated class priors, %
is given by: %
\begin{align}\label{eq:tied_proto_MP}%
    p_{\text{MP}}(F|\mathbf{F}^q (\mathbf{r}))=\frac{p_{F;\text{MP}} \sum\limits_{\mathbf{m}} w_{\mathbf{m}} \phi(\mathbf{p_m},\sigma_F;\mathbf{r})}{p_{F;\text{MP}} \sum\limits_{\mathbf{m}} w_{\mathbf{m}}\phi(\mathbf{p_m},\sigma_F;\mathbf{r})+p_{B;\text{MP}} \sum\limits_{\mathbf{m}} w_{\mathbf{m}}\phi(\mathbf{p_m},\sigma_B;\mathbf{r})}.
\end{align}

\subsection{Multi-Class Classification}%

The distributional foundation of TPM %
allows for another expansion: multi-class classification. %
While multiple prototypes can be used for each foreground class, this work focuses on the simplified case of using a single prototype \(\mathbf{p}_{F_i}\), obtained by MAP in \Cref{eq:proto}, for foreground class \(F_i\). %
\Cref{fig:tied_model_MC} visualizes the class probability \(p(F_1|x)\) of the first foreground class with prototype \(\mathbf{p}_{F_1}=(0,1)\). The model effectively distinguishes %
features from other foreground classes while ensuring the separation of non-typical background features. %

With a similar assumption on class distributions as in \Cref{sec:tied_proto_SP,sec:tied_proto_MP}, the class probability \(p_{\text{MC}}(F_i|\mathbf{F}^q (\mathbf{r}))\) of foreground class \(F_i\) can be obtained as:
\begin{align}\label{eq:tied_proto_MC_F}%
    p_{\text{MC}}(F_i|\mathbf{F}^q (\mathbf{r}))=\frac{p_{F_i}  \phi(\mathbf{p}_{F_i},\sigma_F;\mathbf{r})}{\sum_{i'} p_{F_{i'}}\phi(\mathbf{p}_{F_{i'}},\sigma_F;\mathbf{r})+\sum_{i'} p_B \phi(\mathbf{p}_{F_{i'}},\sigma_B;\mathbf{r})},
\end{align}
where \(p_{F_i}\) is the class prior of %
\(F_i\). The background class probability can then be computed as \(p_{\text{MC}}(B|\mathbf{F}^q (\mathbf{r}))=1-\sum_{i'} p_{\text{MC}}(F_i'|\mathbf{F}^q (\mathbf{r}))\).

Ignoring the background class, i.e., when \(p_B=0\), %
\Cref{eq:tied_proto_MC_F} corresponds to the normalized softmax \cite{wang2017normface}. %
This suggests that the model enables training to separate different foreground features into their respective prototype positions while %
pushing background features away from the prototypes. 
This is not possible in multi-class classification with ADNet++ \cite{hansen2023adnet++}, as its use of the max operation over foreground classes inherently makes it %
a piecewise binary classification model. %

\subsection{Targeting the Ideal %
Threshold}\label{sec:ICP}

\begin{figure}[tp]
     \centering
     \begin{subfigure}[b]{0.225\textwidth}%
         \centering
         \includegraphics[width=\textwidth]{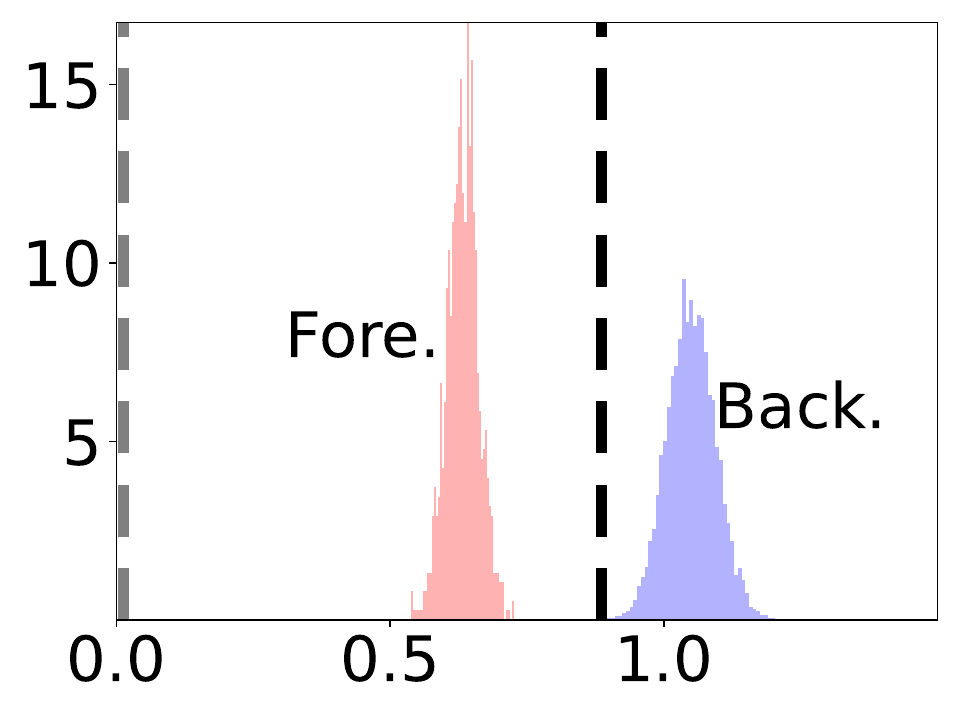}
         \caption{Dist. distrib.}
         \label{fig:dist_distrib}
     \end{subfigure}
     \begin{subfigure}[b]{0.225\textwidth}%
         \centering
         \includegraphics[width=\textwidth]{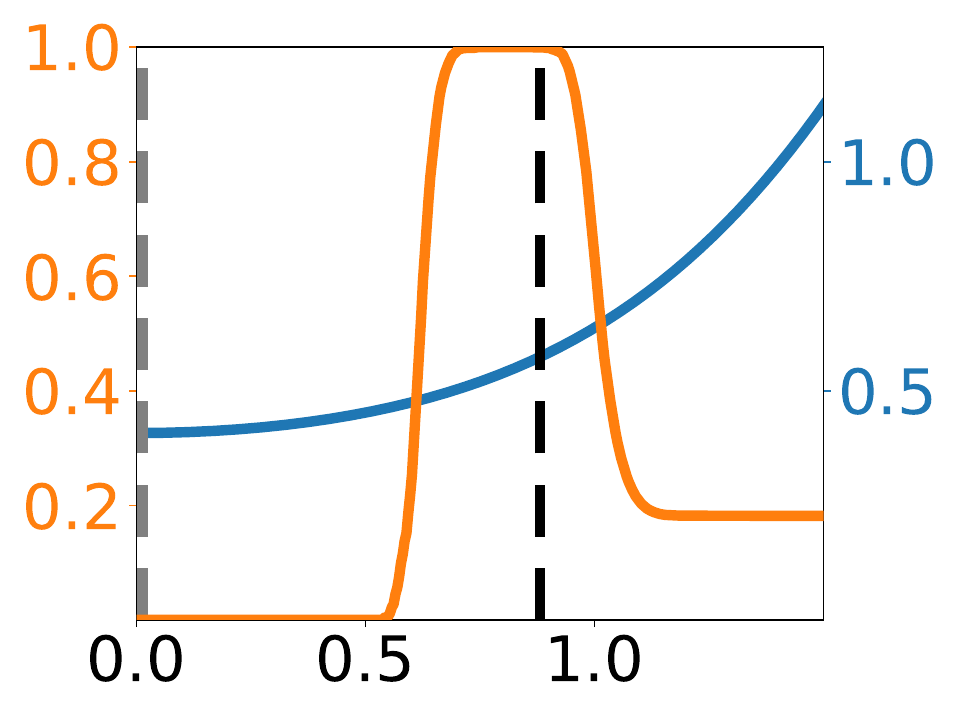}
         \caption{Varying %
         \(T_D\)}
         \label{fig:T_D_CE_dice}
     \end{subfigure}
     \begin{subfigure}[b]{0.225\textwidth}%
         \centering
         \includegraphics[width=\textwidth]{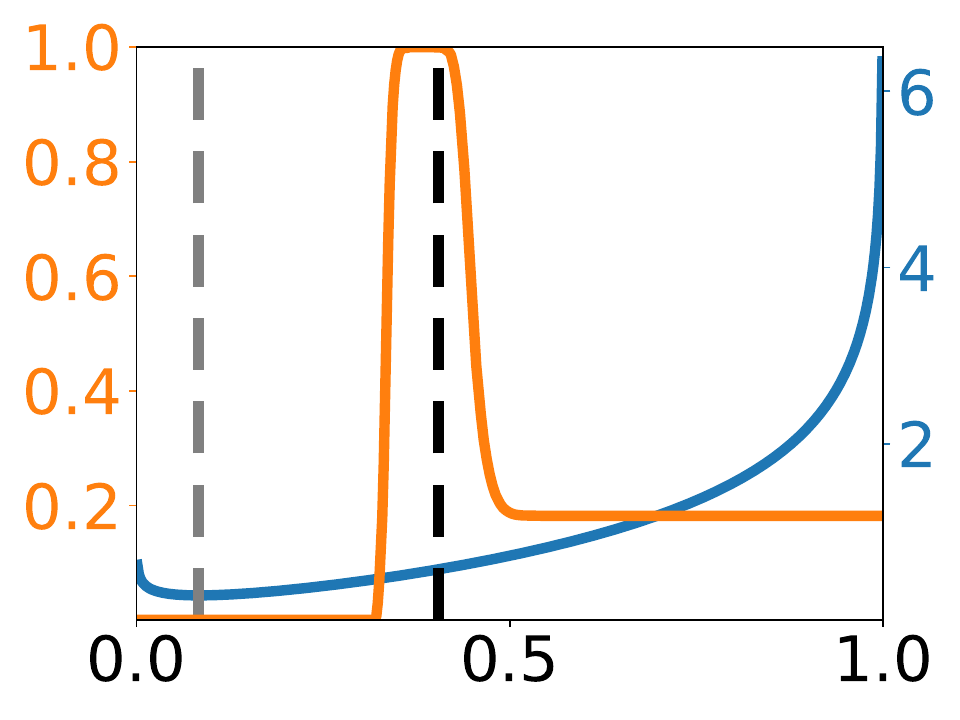}
         \caption{Varying %
         \(p_F\)}
         \label{fig:p_F_CE_dice}
     \end{subfigure}
        \caption{%
        (a) Distributions of distances to a prototype for %
        Gaussian %
        samples projected onto a sphere. Dice score (\textcolor{orange}{orange}) vs. %
        CE (\textcolor{blue}{blue}) for (b) varying distance threshold \(T_D\) and (c) varying foreground class prior \(p_F\). %
        Dashed lines indicate %
        values for 
        the ideal threshold (black) and the CE-minimizing threshold (\textcolor{gray}{gray}).}
        \label{fig:threshold_ex}
\end{figure}

The foreground-background imbalance is a critical challenge in few-shot medical segmentation. While adaptive thresholds \cite{shen2023q,cheng2024frequency,zhu2024learning} attempt %
to tackle this with cross-entropy (CE), as illustrated in \Cref{fig:threshold_ex}, %
minimizing CE does not necessarily yield the highest Dice score, showing %
that CE minimization does not directly optimize segmentation accuracy. Furthermore, using the \emph{ideal} %
proportion---where the predicted foreground pixel counts match the true counts---%
achieves a near-optimal Dice score. Inspired by this and the significant performance gains observed in \cite{boudiaf2021few}, we propose ideal %
thresholds of training query images as %
targets for learning adaptive thresholds. %
For brevity, 
we consider the single-prototype binary classification setting, which readily extends %
to the other settings by taking the minimum distances across prototypes or classes. 

First, we compute %
the upsampled Euclidean distance between features and prototypes. 
We then calculate the ideal %
distance threshold (IDT) \(T_D^*\) for each training query image. %
Mathematically, \(T_D^*=\frac{D\text{;sort}_{|F|}+D\text{;sort}_{|F|+1}}{2},\) %
where \(\text{D;sort}\) is a vector of distances sorted in ascending order %
and \(|F|\) is the foreground pixel count. By leveraging %
sorted distances, IDT ensures the number of predicted foreground pixels matches the true count \(|F|\). %
To employ \Cref{eq:tied_proto}, we convert IDT into the corresponding ideal class prior (ICP) as %
\(p_F^* =1-\text{sig}(-T_D^{*2} \left (  \frac{1}{\sigma_F^2} - \frac{1}{\sigma_B^2}\right )-2d\ln(\frac{\sigma_F}{\sigma_B}))\).
Replacing %
\(p_F\) and the background prior with ICP \(p_F^*\), in \Cref{eq:tied_proto}, %
shifts %
its decision boundary distance to IDT \(T_D^*\). %

\section{Experiments}

\subsection{Setup} %

\textbf{Dataset.} %
Two public abdominal datasets are used for evaluation: ABD-MRI and ABD-CT. %
ABD-MRI \cite{kavur2021chaos} %
consists of 20 abdominal MRI scans with annotations for the liver, left kidney (L.K.), right kidney (R.K.), and spleen (Spl.). 
ABD-CT \cite{landman2015miccai} %
contains 30 abdominal CT scans with the same organ annotations as ABD-MRI. For supervoxel generation, based on %
\cite{hansen2022anomaly,hansen2023adnet++}, %
 we set the size parameter to 5000 for ABD-MRI and 2000 for ABD-CT. %

\textbf{Implementation Details.}
As in  \cite{hansen2022anomaly}, we conduct %
experiments using %
5-fold cross-validation, performing three training runs per split, %
and leverage a single support image slice to segment the entire query volume and measure the mean Dice score. %
We initialize the feature extractor \(f_{\theta}\) with a pre-trained ResNet-101 \cite{he2016deep}, using randomly transformed and reshaped 256×256 images, and optimize for 50k iterations using SGD with a learning rate of \(10^{-3}\). %
Features are 256-dimensional and normalized onto a spherical embedding, 
with \(d=1\) %
for simplicity. %
We run experiments in 
PyTorch (v1.9.0) on an NVIDIA RTX 3090 GPU. %

\textbf{Training and Threshold Options.}
To showcase the effectiveness %
of our approaches, we compare results across various options. For binary segmentation, we %
assess single-prototype-trained models using both single- and five-prototype evaluations. %
The baseline is 
standard ADNet \cite{hansen2022anomaly}, trained with the additional %
T loss, %
which regularizes \(T_S\). %
In contrast, %
our TPM training %
excludes the T loss and only optimizes the segmentation loss. %
To assess the impact of aligning predicted and true pixel counts, we report TPM results using the CE-trained threshold (CE-T), estimated ideal class prior (ICP) values from training data---specifically, the average estimation (AvgEst) and a linear estimation (LinEst), with the latter %
based on the support foreground size \(|F|\) and query slice location---as well as the oracle ideal class prior (OCP) %
for test queries. %
For multi-class segmentation, we compare results from single- and five-foreground training, %
using ADNet++ \cite{hansen2023adnet++} as the 
baseline. %
While it is possible to consider each class count separately, we report LinEst estimated by matching the sum of all foreground class counts. %

\begin{table}[tp]
\footnotesize
\caption{%
Binary segmentation. %
Best result in \textbf{bold}, second-best \underline{underlined}. %
* denotes our contributions and \textcolor{gray}{OCP} is the oracle result %
using 
test labels. %
}\label{tab:binary-EP2-both}
\begin{tabular}{|l|l|l|l|l|l|l|l|l|l|l|l|}
\hline
\multirow{2}{*}{Proto.}&\multirow{2}{*}{Inference} & \multicolumn{5}{|c|}{ABD-MRI \cite{kavur2021chaos}}& \multicolumn{5}{|c|}{ABD-CT \cite{landman2015miccai}}\\ \cline{3-12} %
{}&{} & Liver & L.K. & R.K.& Spl.& Mean& Liver & L.K. & R.K.& Spl.& Mean\\
\hline
\multirow{5}{*}{Single}&ADNet\cite{hansen2022anomaly} %
& \( \underline{74.86}\)& \( 59.41\) & \(84.27\)& \( 59.54\)& \(  69.52\) &\( 56.18 \)& \( 34.82 \) & \(37.56  \)& \( 49.14 \)& \( 44.42   \)\\ %
{}&CE-T*& \( \bf{75.31}\)& \( 56.91\) & \( 78.87\)& \( 56.93\)& \(  67.00\)&\( \underline{71.76 }\)& \( 29.27 \) &\( 28.99 \)& \(  46.90 \)& \(  44.23  \)\\
{} & AvgEst*%
& \( 69.94 \)& \( \underline{70.38 }\) & \( \bf{85.99 }\)& \( 69.40 \)& \( 73.93  \)&\( 33.36  \)& \( 39.42 \) & \( \bf{41.49 }\)& \( 56.47\)& \(  42.68  \)\\ 
{} & LinEst* %
& \( 73.98 \)& \( \bf{71.23 }\) & \( 85.64 \)& \( 69.11 \)& \(  \underline{74.99 }\)&\( 65.34  \)& \( 38.92 \)& \( 40.88 \)& \( 58.49\)& \(  \underline{50.91  }\)\\
{} & \textcolor{gray}{OCP %
}& \textcolor{gray}{\( 85.71 \)}& \textcolor{gray}{\( 87.19 \)} & \textcolor{gray}{\( 89.47 \)}& \textcolor{gray}{\( 81.05 \)}& \textcolor{gray}{\(  85.85 \)}& \textcolor{gray}{\( 81.01\)}& \textcolor{gray}{\( 65.65 \)}& \textcolor{gray}{\( 67.34 \)}& \textcolor{gray}{\(  76.15 \)}& \textcolor{gray}{\(  72.54  \)}\\ \hline
\multirow{4}{*}{\shortstack{Multi \\ (5)}}&CE-T*& \( 74.85\)& \( 54.92 \) & \( 77.60 \)& \( 56.75 \)& \( 66.03 \)&\( \bf{75.50 }\)& \( 28.48 \) &\( 28.79 \)& \( 46.06 \)& \( 44.71 \)\\
{} & AvgEst* %
& \( 73.60 \)& \( 68.08 \) & \( \underline{85.82 }\)& \( \underline{69.65 }\)& \( 74.29 \)&\( 46.26 \)& \( \underline{39.97 }\) &\( 40.49 \)& \( \underline{58.71 }\)& \( 46.36 \)\\
{} & LinEst* %
& \( 74.83 \)& \( 70.18 \) & \( 85.61 \)& \( \bf{70.18 }\)& \( \bf{75.20 }\)&\( 70.40 \)& \( \bf{40.08 }\) & \( \underline{41.40 }\)& \( \bf{58.96 }\)& \(\bf{ 52.71 }\)\\ %
{} & \textcolor{gray}{OCP %
}& \textcolor{gray}{\( 85.39 \)}& \textcolor{gray}{\( 86.00 \)} & \textcolor{gray}{\( 89.00 \)}& \textcolor{gray}{\( 81.04 \)}& \textcolor{gray}{\( 85.36 \)}&\textcolor{gray}{\( 82.96 \)}& \textcolor{gray}{\( 64.14 \)}& \textcolor{gray}{\( 66.00 \)}& \textcolor{gray}{\( 76.14 \)}& \textcolor{gray}{\( 72.31 \)}\\
\hline
\end{tabular}
\end{table}

\subsection{Results}
\Cref{tab:binary-EP2-both} presents the binary segmentation results, confirming the merits of the ICP and multi-prototype (MP) approaches. In the single-prototype (SP) setting, while ADNet \cite{hansen2022anomaly} tends to outperform CE-T (ADNet equivalent without T loss), %
AvgEst generally yields higher Dice scores than ADNet, except for the liver class, which has a relatively large foreground size. Moreover, LinEst further improves performance by incorporating size information, revealing ICP's %
benefit in this setting. The notably higher Dice scores of the OCP, which leverages oracle information, suggest significant room %
for improvement with advanced ICP estimation. Compared to SP predictions, %
the proposed MP extension further enhances the performance of the estimated ICPs.
\Cref{tab:multi-both} presents the results for TPM's multi-class segmentation extension, demonstrating the efficacy of ICP and multi-foreground (MF) training. Focusing on single foreground (SF) training, TPM with estimated ICPs significantly outperforms the ADNet++ baseline \cite{hansen2023adnet++}. Furthermore, MF consistently enhances TPM’s performance over SF. %
This highlights the superior representation learning capability of the proposed extension of our TPM. %

\begin{table}[tp]
\footnotesize
\caption{%
Multi-class segmentation. * denotes our contributions. %
}\label{tab:multi-both}
\begin{tabular}{|l|l|l|l|l|l|l|l|l|l|l|l|}
\hline
\multirow{2}{*}{Train}& \multirow{2}{*}{Inference} & \multicolumn{5}{|c|}{ABD-MRI \cite{kavur2021chaos}}& \multicolumn{5}{|c|}{ABD-CT \cite{landman2015miccai}}\\ \cline{3-12}
{}&{} & Liver & L.K. & R.K.& Spl.& Mean& Liver & L.K. & R.K.& Spl.& Mean\\
\hline
\multirow{2}{*}{SF} & ADNet++\cite{hansen2023adnet++}& \(64.13\)& \(54.62\)&  \(59.86\)& \(41.36\)& \( 54.99\)& \underline{\(65.58\)}& \(20.67\)& \(16.66\)& \(19.59\)& \(30.63\)\\
{} & LinEst* %
& \underline{\(65.50\)} & \underline{\(69.43\)} &  \underline{\(70.72\)}& \underline{\(47.15\)} & \underline{\(  63.20\)}& \(61.97\)& \underline{\(31.77\)}& \underline{\(24.46\)}& \(\bf{29.15}\)& \underline{\(36.84\)}\\ \hline %
\multirow{2}{*}{\shortstack{MF\\(5)}} & ADNet++\cite{hansen2023adnet++} & \(57.22\)& \(45.85\)& \(54.57\)& \(29.93\)& \( 46.89\)&\(61.14\)& \(15.43\)& \(12.00\)& \(12.88\)& \(25.36\)\\
{} & LinEst* %
& \(\bf{71.91}\) & \(\bf{71.00}\)& \(\bf{73.85}\)& \(\bf{49.94}\) & \( \bf{66.67}\)&\(\bf{66.84}\)& \(\bf{37.90}\)& \(\bf{27.55}\)& \underline{\(28.95\)}& \(\bf{40.31}\)\\ 
\hline
\end{tabular}
\end{table}

\section{Conclusion}

In this work, we introduce the tied prototype model (TPM), a probabilistic reformulation of %
ADNet \cite{hansen2022anomaly} that advances its %
capabilities.
Notably, TPM enables seamless extensions to multi-prototype and multi-class segmentation. %
Furthermore, 
we highlight %
the significance of ideal thresholds as target thresholds. 
Our experimental results demonstrate the performance improvements %
of each component, %
paving %
new research directions 
in prototype-based few-shot segmentation. %

\begin{credits}
\subsubsection{\ackname} 
This work was supported by The Research Council of Norway (Visual Intelligence, grant no. 309439 as well as FRIPRO grant no. 315029 and IKTPLUSS grant no. 303514).

\subsubsection{\discintname}
The authors have no competing interests to declare that are relevant to the content of this article.
\end{credits}

\bibliographystyle{splncs04}
\bibliography{My_library}

\begin{thebibliography}{10}
\providecommand{\url}[1]{\texttt{#1}}
\providecommand{\urlprefix}{URL }
\providecommand{\doi}[1]{https://doi.org/#1}

\bibitem{balasundaram2023foreground}
Balasundaram, A., Kavitha, M.S., Pratheepan, Y., Akshat, D., Kaushik, M.V.: A
  foreground prototype-based one-shot segmentation of brain tumors. Diagnostics
   \textbf{13}(7), ~1282 (2023)

\bibitem{boudiaf2021few}
Boudiaf, M., Kervadec, H., Masud, Z.I., Piantanida, P., Ben~Ayed, I., Dolz, J.:
  Few-shot segmentation without meta-learning: A good transductive inference is
  all you need? In: Proceedings of the IEEE/CVF conference on computer vision
  and pattern recognition. pp. 13979--13988 (2021)

\bibitem{chen2021deep}
Chen, X., Sun, S., Bai, N., Han, K., Liu, Q., Yao, S., Tang, H., Zhang, C., Lu,
  Z., Huang, Q., et~al.: A deep learning-based auto-segmentation system for
  organs-at-risk on whole-body computed tomography images for radiation
  therapy. Radiotherapy and Oncology  \textbf{160},  175--184 (2021)

\bibitem{cheng2024frequency}
Cheng, Z., Wang, S., Xin, T., Zhang, H.: Frequency-aware adaptive filtering
  network for few-shot medical image segmentation. In: 2024 IEEE International
  Conference on Bioinformatics and Biomedicine (BIBM). pp. 1890--1897. IEEE
  (2024)

\bibitem{cheng2024few}
Cheng, Z., Wang, S., Xin, T., Zhou, T., Zhang, H., Shao, L.: Few-shot medical
  image segmentation via generating multiple representative descriptors. IEEE
  Transactions on Medical Imaging  (2024)

\bibitem{hansen2022anomaly}
Hansen, S., Gautam, S., Jenssen, R., Kampffmeyer, M.: Anomaly
  detection-inspired few-shot medical image segmentation through
  self-supervision with supervoxels. Medical Image Analysis  \textbf{78},
  102385 (2022)

\bibitem{hansen2023adnet++}
Hansen, S., Gautam, S., Salahuddin, S.A., Kampffmeyer, M., Jenssen, R.:
  Adnet++: A few-shot learning framework for multi-class medical image volume
  segmentation with uncertainty-guided feature refinement. Medical Image
  Analysis  \textbf{89},  102870 (2023)

\bibitem{he2016deep}
He, K., Zhang, X., Ren, S., Sun, J.: Deep residual learning for image
  recognition. In: Proceedings of the IEEE conference on computer vision and
  pattern recognition. pp. 770--778 (2016)

\bibitem{kavur2021chaos}
Kavur, A.E., Gezer, N.S., Bar{\i}{\c{s}}, M., Aslan, S., Conze, P.H., Groza,
  V., Pham, D.D., Chatterjee, S., Ernst, P., {\"O}zkan, S., et~al.: Chaos
  challenge-combined (ct-mr) healthy abdominal organ segmentation. Medical
  Image Analysis  \textbf{69},  101950 (2021)

\bibitem{landman2015miccai}
Landman, B., Xu, Z., Igelsias, J., Styner, M., Langerak, T., Klein, A.: Miccai
  multi-atlas labeling beyond the cranial vault--workshop and challenge. In:
  Proc. MICCAI Multi-Atlas Labeling Beyond Cranial Vault—Workshop Challenge.
  vol.~5, p.~12 (2015)

\bibitem{liu2020part}
Liu, Y., Zhang, X., Zhang, S., He, X.: Part-aware prototype network for
  few-shot semantic segmentation. In: Computer Vision--ECCV 2020: 16th European
  Conference, Glasgow, UK, August 23--28, 2020, Proceedings, Part IX 16. pp.
  142--158. Springer (2020)

\bibitem{oreshkin2018tadam}
Oreshkin, B., Rodr{\'\i}guez~L{\'o}pez, P., Lacoste, A.: Tadam: Task dependent
  adaptive metric for improved few-shot learning. Advances in neural
  information processing systems  \textbf{31} (2018)

\bibitem{ouyang2020self}
Ouyang, C., Biffi, C., Chen, C., Kart, T., Qiu, H., Rueckert, D.:
  Self-supervision with superpixels: Training few-shot medical image
  segmentation without annotation. In: Computer Vision--ECCV 2020: 16th
  European Conference, Glasgow, UK, August 23--28, 2020, Proceedings, Part XXIX
  16. pp. 762--780. Springer (2020)

\bibitem{salahuddin2022self}
Salahuddin, S.A., Hansen, S., Gautam, S., Kampffmeyer, M., Jenssen, R.: A
  self-guided anomaly detection-inspired few-shot segmentation network. CEUR
  Workshop Proceedings (2022)

\bibitem{shen2023q}
Shen, Q., Li, Y., Jin, J., Liu, B.: Q-net: Query-informed few-shot medical
  image segmentation. In: Proceedings of SAI Intelligent Systems Conference.
  pp. 610--628. Springer (2023)

\bibitem{snell2017prototypical}
Snell, J., Swersky, K., Zemel, R.: Prototypical networks for few-shot learning.
  Advances in Neural Information Processing Systems  \textbf{30},  4077--4087
  (2017)

\bibitem{tian2025multilevel}
Tian, Y., Liang, Y., Chen, Y., Zhang, J., Bian, H.: Multilevel support-assisted
  prototype optimization network for few-shot medical segmentation of lung
  lesions. Scientific Reports  \textbf{15}(1), ~3290 (2025)

\bibitem{tomasetti2023self}
Tomasetti, L., Hansen, S., Khanmohammadi, M., Engan, K., H{\o}llesli, L.J.,
  Kurz, K.D., Kampffmeyer, M.: Self-supervised few-shot learning for ischemic
  stroke lesion segmentation. In: 2023 IEEE 20th International Symposium on
  Biomedical Imaging (ISBI). pp.~1--5. IEEE (2023)

\bibitem{tsochatzidis2021integrating}
Tsochatzidis, L., Koutla, P., Costaridou, L., Pratikakis, I.: Integrating
  segmentation information into cnn for breast cancer diagnosis of mammographic
  masses. Computer Methods and Programs in Biomedicine  \textbf{200},  105913
  (2021)

\bibitem{wang2017normface}
Wang, F., Xiang, X., Cheng, J., Yuille, A.L.: Normface: L2 hypersphere
  embedding for face verification. In: Proceedings of the 25th ACM
  international conference on Multimedia. pp. 1041--1049 (2017)

\bibitem{wang2019panet}
Wang, K., Liew, J.H., Zou, Y., Zhou, D., Feng, J.: Panet: Few-shot image
  semantic segmentation with prototype alignment. In: proceedings of the
  IEEE/CVF international conference on computer vision. pp. 9197--9206 (2019)

\bibitem{yang2020prototype}
Yang, B., Liu, C., Li, B., Jiao, J., Ye, Q.: Prototype mixture models for
  few-shot semantic segmentation. In: Computer Vision--ECCV 2020: 16th European
  Conference, Glasgow, UK, August 23--28, 2020, Proceedings, Part VIII 16. pp.
  763--778. Springer (2020)

\bibitem{zhao2024cpnet}
Zhao, Z., Gao, J., Luo, Z., Li, S.: Cpnet: Cross prototype network for few-shot
  medical image segmentation. In: Chinese Conference on Pattern Recognition and
  Computer Vision (PRCV). pp. 135--149. Springer (2024)

\bibitem{zhu2024learning}
Zhu, Y., Cheng, Z., Wang, S., Zhang, H.: Learning de-biased prototypes for
  few-shot medical image segmentation. Pattern Recognition Letters
  \textbf{183},  71--77 (2024)

\bibitem{zhu2023few}
Zhu, Y., Wang, S., Xin, T., Zhang, H.: Few-shot medical image segmentation via
  a region-enhanced prototypical transformer. In: International Conference on
  Medical Image Computing and Computer-Assisted Intervention. pp. 271--280.
  Springer (2023)

\bibitem{zhu2024partition}
Zhu, Y., Wang, S., Xin, T., Zhang, Z., Zhang, H.: Partition-a-medical-image:
  Extracting multiple representative sub-regions for few-shot medical image
  segmentation. IEEE Transactions on Instrumentation and Measurement  (2024)

\end{thebibliography}

\end{document}